\begin{document}

\newtheorem{lemma}{Lemma}
\newtheorem{theorem}{Theorem}

\author{ E.E. Ivanko\\  \footnotesize a) Institute of Mathematics and Mechanics, Ural branch, RAS\\  \footnotesize b) Ural Federal University\\ \footnotesize evgeny.ivanko@gmail.com}
\title{The destiny of constant structure discrete time closed semantic systems}
\date{}
\maketitle{}

\begin{abstract}
\noindent {\footnotesize \qquad Constant structure closed semantic systems are the systems each element of which receives its definition through the correspondent unchangeable set of other elements of the system. Discrete time means here that the definitions of the elements change iteratively and simultaneously based on the ``neighbor portraits''  from the previous iteration. I prove that the iterative redefinition process in such class of systems will quickly degenerate into a series of pairwise isomorphic states and discuss some directions of further research.

\textit{Keywords:} closed semantic system, graph, isomorphism.}
\end{abstract}

\markboth{E.E. Ivanko}{The destiny of closed semantic systems}


\section*{Introduction}
\hspace{0.7 cm}A closed semantic system (CSS) may be thought as a system each element of which is defined through other elements of this system. One of the most natural and important CSSs is language.
Every child at the age of 3--5 years is full of questions: ``Why?'' , ``What for?'' , ``How?''\  \cite{bloom}. At that time, the child's world view is growing and getting as closed as possible: every word claims to be explained in terms of other words. 

One of the most simple CSSs is a discrete time system with a permanent structure: 1) the number of the elements that are involved in the definition of each element of such system does not change with time; 2) all the elements are redefined simultaneously, basing on the states of the neighbors taken at the previous simultaneous iteration. In this paper, I study the behavior of such trivial systems, starting from the ``zero''\  state, where all the system's elements are equal. 
In contrast with the above mentioned growing CSS of small kids, a CSS with a constant structure 
resembles an adult's world view, where the addition of new notions and connections between them ceases as the personal world view approaches the best known contemporary world view of the humanity. The simultaneous iterative changes in discrete time is a reasonable assumption as far as we consider an artificial CSS designed for a deterministic Turing machine equivalent \cite{papa}. However, the discreteness can hardly correspond to real-world examples, so it would be challenging to get rid of it in the future.

\begin{figure}[hp]
\centering
\captionsetup{justification=centering}
\includegraphics[width=0.7\textwidth]{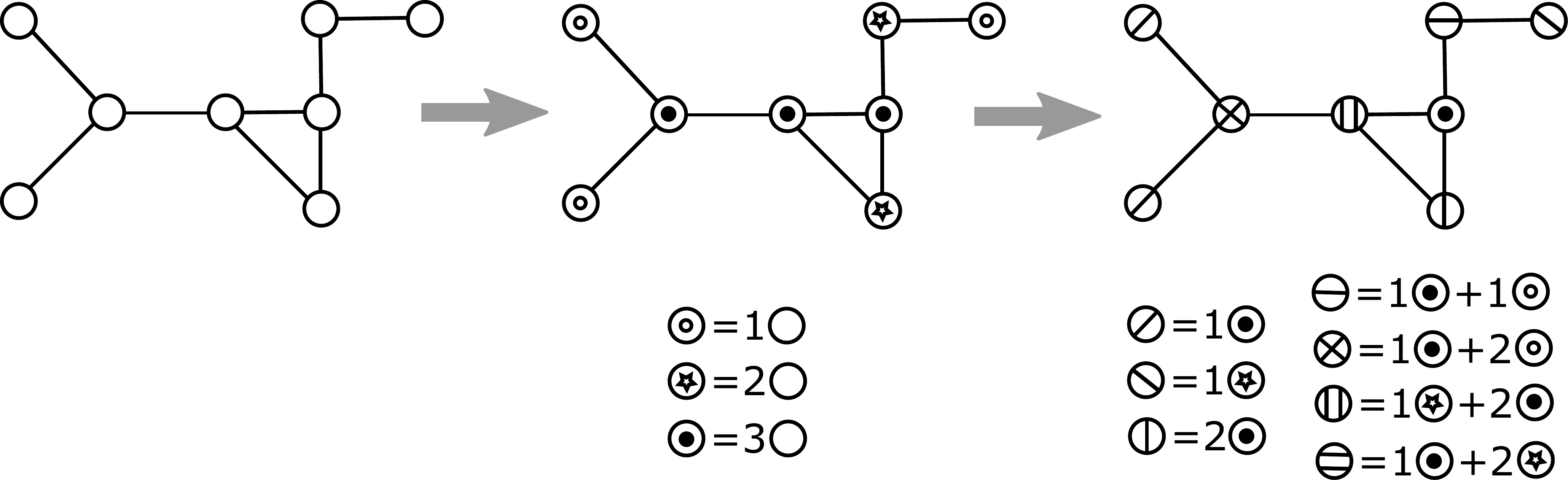}
\caption{Two iterations of the coloring process}
\end{figure}
The development of the described constant-structure discrete time CSS may be modelled as a discrete time coloring process on a connected graph. From now and till almost the end of the paper I will consider the iterative redefinition process on the vertices only, assuming that the edges are plain connectors, which do not have their own colors and do not participate in the definition of the incident vertices.
Initially, all the vertices have the same ``type'' , so at the first iteration the only difference between any 2 vertices is the number of their neighbors (degree).  For illustrative purposes, we may assign each degree a specific ``color'' . At the second iteration the ``neighbor portrait''\   of each vertex becomes more complicated (e.g., ``5 neighbors''\  at the first iteration becomes ``2 red and 3 green''\  at the second). This iterative coloring process (see Fig.1) produces equitable partitions \cite{godzilla} (perfect colorings, regular partitions and graph divisors), which are well known \cite{unger,weis,arlaz} and were successfully used in e.g. graph isomorphism heuristic Nauty \cite{mckay} (the coloring process in the latter is similar to the Algorithm of this paper).

How will this iterative coloring process behave? Will the size of the palette ever increase or it can decrease and then oscillate? Will the process become self-repeating? If yes, then how fast and what will be the cycle size? The next section gives the answers to these questions.




 



\section{Mathematical model}

\hspace{0.7 cm}There are some little less common mathematical notations used below, which I would rather state explicitly: a) the number of adjacent vertices for each $v\in V$ in simple $G$ is equal to the degree of $v$ and is referred to as $deg(v)$; b) the image $Y$ of the mapping $\phi\colon X\rightarrow Y$ is referred to as $\phi[X]$ and c) an arbitrary (possible multi-valued) mapping between $X$ and $Y$  is referred to as  $X\rightrightarrows Y$.

Let $G=(V,E)$ be a simple graph with vertices $V$ and edges $E$. Each vertex possesses a color, which changes iteratively depending on the colors of the vertex's neighbors. Let $Pal_i\subset\mathbb{N}$ be the set (palette) of colors and $Col_i\colon V\rightarrow Pal_i$ be the coloring function at the $i$-th iteration.
The iterative coloring process may be described as follows:
\ \\

\begin{algorithm*}[h]
 \caption{Infinite iterative vertices coloring process (IICP)}
 $Pal_0:=\{0\}$\;
 $\forall v\in V\ Col_0(v):=0$\;
$\forall v\in V\ Port_0(v):=(k(v))$, where $k(v)=deg(v)$\;
 $i:=1$\;
 \Repeat{False}{
 - let $\chi_i$ be some indexing bijection $\chi_i\colon Port_{i-1}[V]\leftrightarrow\overline{1,|Port_{i-1}[V]|}$,  then the current palette $Pal_i:=\chi_i[Port_{i-1}[V]]$ (consists of the indices of the elements of $Port_{i-1}[V]$); 
 let $K_i:=|Pal_i|$
\;
 - color the graph: $\forall v\in V\ \ Col_i(v):=\chi_i(Port_{i-1}(v))$\;
 - build a new ``neighborhood portrait''\  of each vertex: $$Port_i(v):=\left(k^i_1(v),\ldots,k^i_j(v),\ldots,k^i_{K_i}(v)\right),$$ where $k^i_j(v)$ is the number of adjacent vertices of $v\in V$ that possess the color $j\in Pal_i$\;
- $i:=i+1$\;
 }
\end{algorithm*}
\ \\

\begin{figure}[h]
\centering
\captionsetup{justification=centering}
\includegraphics[width=0.5\textwidth]{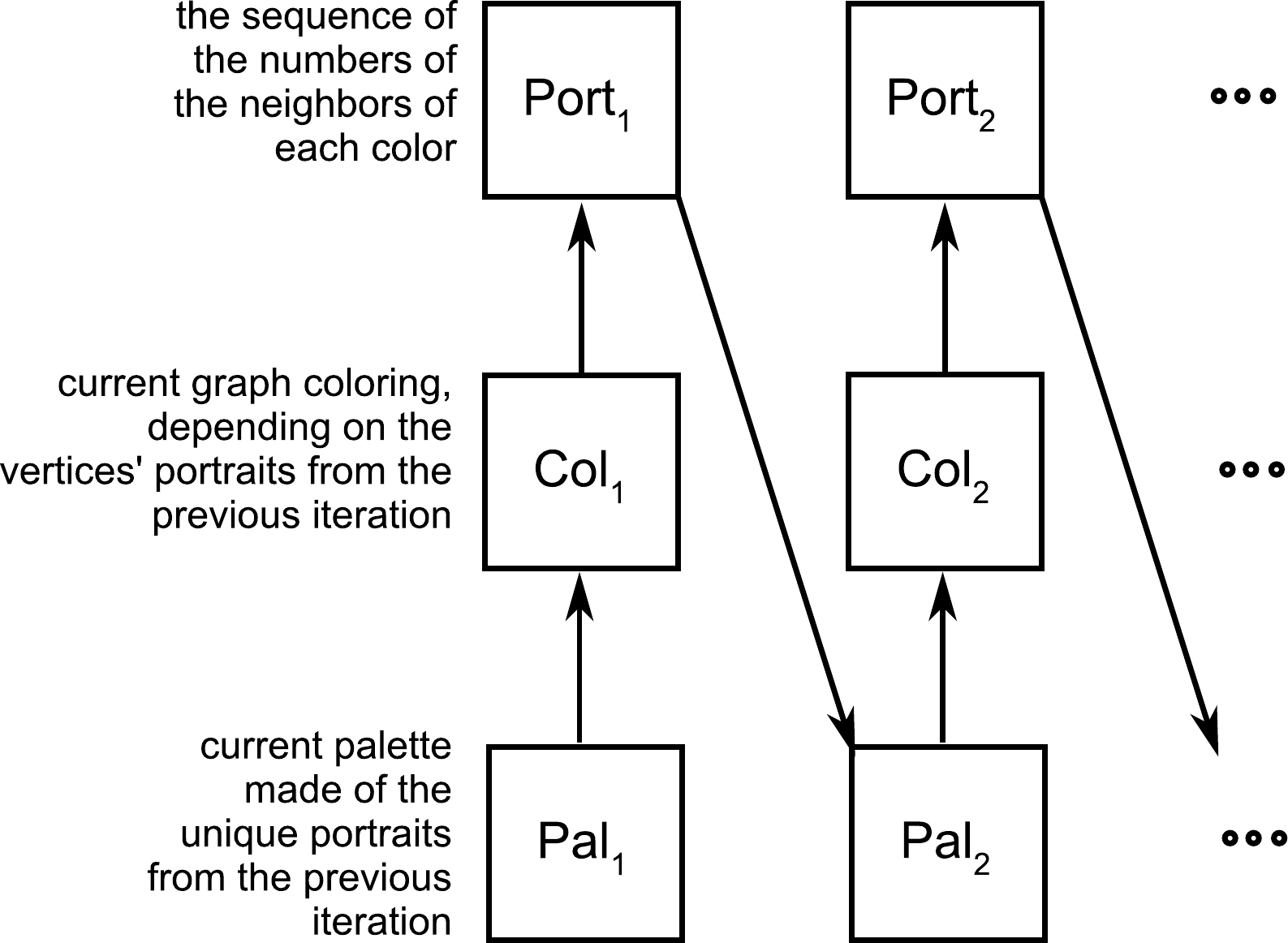}
\caption{Scheme of infinite iterative coloring }
\end{figure}

How will this IICP behave at infinity? Below I prove that, starting from some iteration, all the following colorings will be pairwise isomorphic in the sense of the following definition:

Lemma 1 makes the first step in this direction. It shows that two consequent isomorphic colorings result in the degeneration of the following IICP.

\paragraph{\bf Coloring isomorphism.} Two colorings $Col_{i}$ and $Col_{j}$  are isomorphic ($Col_{i}\sim Col_{j}$), if there exists a bijection
$\phi\colon Pal_{i}\leftrightarrow Pal_j$ such that $\forall v\in V\ \phi(Col_{i}(v))=Col_j(v)$.



\begin{lemma} If $Col_{L-1}\sim Col_{L}$ for some $L$, then all the following colorings will be pairwise isomorphic: $\forall L'\ge L\ \forall L'' \ge L\ Col_{L'}\sim Col_{L''}$. 
\end{lemma}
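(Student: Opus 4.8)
The plan is to isolate a single \emph{propagation} step — that the coloring at iteration $i+1$ is determined, up to coloring isomorphism, by the coloring at iteration $i$ together with the fixed graph $G$ — and then iterate it. Concretely, I would first establish the auxiliary claim: if $Col_i\sim Col_j$, then $Col_{i+1}\sim Col_{j+1}$.

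For the claim, suppose $\phi\colon Pal_i\leftrightarrow Pal_j$ witnesses $Col_i\sim Col_j$, so that $Col_j(w)=\phi(Col_i(w))$ for every $w\in V$; note $|Pal_i|=|Pal_j|=:K$ since $\phi$ is a bijection. Because the set $N(v)$ of vertices adjacent to $v$ is fixed and independent of the iteration, for every $v\in V$ and every color $c\in Pal_i$ we have $\{w\in N(v)\colon Col_i(w)=c\}=\{w\in N(v)\colon Col_j(w)=\phi(c)\}$, hence $k^i_c(v)=k^j_{\phi(c)}(v)$. Thus $Port_j(v)$ is obtained from $Port_i(v)$ by one and the same permutation of coordinates (namely $\phi$, viewing both palettes as $\overline{1,K}$), the same permutation for every vertex. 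Consequently $Port_i(v)=Port_i(w)\iff Port_j(v)=Port_j(w)$ for all $v,w\in V$; since $\chi_{i+1}$ and $\chi_{j+1}$ are bijections, this yields $Col_{i+1}(v)=Col_{i+1}(w)\iff Col_{j+1}(v)=Col_{j+1}(w)$. In other words $Col_{i+1}$ and $Col_{j+1}$ induce the same partition of $V$, and the map sending the $Col_{i+1}$-color of each block of this partition to its $Col_{j+1}$-color is a well-defined bijection $Pal_{i+1}\leftrightarrow Pal_{j+1}$ witnessing $Col_{i+1}\sim Col_{j+1}$.

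With the claim in hand, the lemma follows by induction. The relation $\sim$ is an equivalence relation (the identity, inverse, and composition of palette bijections witness reflexivity, symmetry, and transitivity), and from the hypothesis $Col_{L-1}\sim Col_L$ the claim gives $Col_L\sim Col_{L+1}$, then $Col_{L+1}\sim Col_{L+2}$, and so on; hence all of $Col_{L-1},Col_L,Col_{L+1},\ldots$ lie in a single $\sim$-class, and in particular $Col_{L'}\sim Col_{L''}$ whenever $L',L''\ge L$.

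The only place that needs care — and the main, though minor, obstacle — is the bookkeeping around the arbitrary indexing bijections $\chi_i$: the numerical color labels produced at iterations $i$ and $j$ need not coincide even when the colorings are isomorphic, so the argument must be phrased entirely in terms of the partitions of $V$ induced by the $Col$ maps rather than the literal labels, and one must verify that a color isomorphism acts on the portrait vectors as a genuine coordinate permutation that is common to all vertices. Once that identification is set up, no computation is required.
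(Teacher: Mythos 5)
Your proposal is correct and takes essentially the same route as the paper: the key observation that a palette isomorphism acts on every portrait by one common coordinate permutation (your $k^i_c(v)=k^j_{\phi(c)}(v)$ is the paper's relation $k^{L-1}_i(v)=k^{L}_{\varphi(i)}(v)$), and your block-wise palette map is exactly the paper's composition $\chi_{L+1}\circ\psi\circ\chi_{L}^{-1}$, only verified through the coincidence of the induced partitions of $V$ rather than through the paper's explicit single-valuedness/injectivity/surjectivity check of $\psi$ on portrait sets. You are somewhat more explicit than the paper about the tail of the argument (iterating the one-step claim and invoking transitivity of $\sim$), which the paper compresses into ``it is enough to prove $Col_{L}\sim Col_{L+1}$.''
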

\begin{proof}
It is enough to prove that $Col_{L}\sim Col_{L+1}$. Let us select an arbitrary $v\in V$ and
take the portrait of $v$ from the previous iteration, which corresponds to 
$Col_{L+1}(v)$:
\begin{equation}
\label{first_xi}
\left(k^L_{1}(v),\ldots,k^L_{K_L}(v)\right)=\chi_{L+1}^{-1}(Col_{L+1}(v)).
\end{equation}

Let us consider the mapping $\psi\colon Port_{L-1}[V]\rightarrow Port_{L}[V]$:
\begin{equation}
\label{varphi}
\psi\left(\left(k^{L-1}_{1}(v),\ldots,k^{L-1}_{K_{L-1}}(v)\right)\right)=\left(k^L_{1}(v),\ldots,k^L_{K_L}(v)\right)
\end{equation}
and prove that it is a bijection.

1) $\psi$ is defined for all elements of $Port_{L-1}$, since it is defined for all $v\in V$.

2) Each portrait $p$ from $Port_{L}$ corresponds to at least one vertex $v'\in V$ at the $L$-th iteration; selecting this $v'$ in \eqref{varphi}, we get at least one preimage for $p$ in $Port_{L-1}$, which means that $\psi$ is surjective.

3) The single-valuedness of $\psi$ is slightly more complicated to prove; suppose there is a portrait from $Port_{L-1}$
\begin{equation}
\label{eq_port1}
p=\left(k^{L-1}_{1}(v_1),\ldots,k^{L-1}_{K_{L-1}}(v_1)\right)=\left(k^{L-1}_{1}(v_2),\ldots,k^{L-1}_{K_{L-1}}(v_2)\right)
\end{equation}
that is mapped by $\psi$ into two portraits from $Port_{L}$
$$p_1=\left(k_1^{L}(v_1),\ldots,k_{K_{L}}^{L}(v_1)\right)\ \text{and}\ \ p_2=\left(k_1^{L}(v_2),\ldots,k_{K_{L}}^{L}(v_2)\right).$$

Since  $Col_{L-1}\sim Col_{L}$, there exists an isomorphism $\varphi\colon Pal_{L-1}\leftrightarrow Pal_L$ such that 
\begin{equation}
\label{k_iso}
\forall v\in V\ \ \forall i\in\overline{1,K_L}\ \ k^{L-1}_{i}(v)=k^{L}_{\varphi(i)}(v).
\end{equation}
Considering this, we can write
$$\forall i\in\overline{1,K_L}\ \ \ k^{L}_{i}(v_1)\stackrel{\eqref{k_iso}}{=}k^{L-1}_{\varphi^{-1}(i)}(v_1)\stackrel{\eqref{eq_port1}}{=}k^{L-1}_{\varphi^{-1}(i)}(v_2)\stackrel{\eqref{k_iso}}{=}k^{L}_{\varphi(i)}(v_2),$$
which means that $p_1$ and $p_2$ are equal elementwise and proves the single-valuedness of $\psi$.


4) The injectiveness of $\psi$ may be proved in the same manner; suppose there are two portraits from $Port_{L-1}$ $$p_1=\left(k_1^{L-1}(v_1),\ldots,k_{K_{L-1}}^{L-1}(v_1)\right)\ \text{and}\ \ p_2=\left(k_1^{L-1}(v_2),\ldots,k_{K_{L-1}}^{L-1}(v_2)\right)$$  that are mapped by $\psi$ into one portrait from $Port_{L}$
\begin{equation}
\label{eq_port2}
\left(k^L_{1}(v_1),\ldots,k^L_{K_L}(v_1)\right)=\left(k^L_{1}(v_2),\ldots,k^L_{K_L}(v_2)\right).
\end{equation}
Returning to \eqref{k_iso}, we have
$$\forall i\in\overline{1,K_L}\ \ \ k^{L-1}_{i}(v_1)\stackrel{\eqref{k_iso}}{=}k^L_{\varphi(i)}(v_1)\stackrel{\eqref{eq_port2}}{=}k^L_{\varphi(i)}(v_2)\stackrel{\eqref{k_iso}}{=}k^{L-1}_{i}(v_2),$$
which proves that $\psi$ is injective and therefore is a bijection.

Applying $\psi^{-1}$ to the right-hand side of \eqref{first_xi}, we get an element of $Port_{L-1}$, which can be mapped further to a color from $Pal_L$ by $\chi_{L}$. 
The final sought-after mapping $\phi\colon Pal_{L}\leftrightarrow Pal_{L+1}$ is the composition of the three above-mentioned mappings,
$$\phi=\chi_{L+1}\circ\psi\circ\chi^{-1}_{L},$$
which is a bijection since all the composing mappings are bijections.
\end{proof}

At this point, we know that two consequent isomorphic colorings make the further coloring process degenerated, but is this situation impossible, probable or inevitable? The answer to this question starts with the analysis of the behavior of the palette size. The number of colors in the palettes evidently belongs to $\overline{1,|V|}$, and intuitively this number either increases or remains constant during the IICP. 

\begin{lemma} The size of the palette never decreases: $\forall i\in\mathbb{N}\ \forall v_1,v_2\in V\ (Col_{i+1}(v_1)= Col_{i+1}(v_2))\Rightarrow (Col_i(v_1)= Col_i(v_2))$.
\end{lemma}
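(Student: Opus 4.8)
The plan is to show that the color at iteration $i+1$ refines (determines) the color at iteration $i$, which is exactly the contrapositive-free restatement of the stated implication. Concretely, I want to argue that $Col_{i+1}(v)$ determines $Port_{i-1}(v)$, and $Port_{i-1}(v)$ determines $Col_i(v)$; chaining these gives $Col_{i+1}(v_1) = Col_{i+1}(v_2) \Rightarrow Col_i(v_1) = Col_i(v_2)$.

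First I would unwind the definitions from the Algorithm. By construction, $Col_{i+1}(v) = \chi_{i+1}(Port_i(v))$ where $\chi_{i+1}$ is a bijection on $Port_i[V]$, so $Col_{i+1}(v_1) = Col_{i+1}(v_2)$ is equivalent to $Port_i(v_1) = Port_i(v_2)$, i.e. the two vertices have the same neighborhood portrait at iteration $i$: $k^i_j(v_1) = k^i_j(v_2)$ for every color $j \in Pal_i$. Similarly $Col_i(v) = \chi_i(Port_{i-1}(v))$ with $\chi_i$ a bijection, so it suffices to prove $Port_{i-1}(v_1) = Port_{i-1}(v_2)$, i.e. $k^{i-1}_j(v_1) = k^{i-1}_j(v_2)$ for every $j \in Pal_{i-1}$.

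The key step is an induction on $i$. For the base case $i=1$: $Port_1(v) = (k^1_1(v), \ldots, k^1_{K_1}(v))$ records how many neighbors of $v$ carry each color in $Pal_1$; summing these entries recovers $\sum_j k^1_j(v) = deg(v) = k(v)$, and $Port_0(v) = (k(v))$, so $Port_1(v)$ determines $Port_0(v)$, hence $Col_1(v)$. For the inductive step, assume $Port_i$ determines $Col_i$ (equivalently $Port_{i-1}$). I want to show $Port_{i+1}$ determines $Port_i$. Fix a color $j \in Pal_i$. Using the indexing bijection $\chi_i$, the set of colors in $Pal_{i+1}$ that "lie over" $j$ — that is, the colors assigned at iteration $i+1$ to vertices whose iteration-$i$ color is $j$ — is well-defined, because by the inductive hypothesis the iteration-$i$ color of a vertex is a function of its iteration-$(i+1)$ color. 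Then $k^i_j(v) = \sum_{\ell \text{ over } j} k^{i+1}_\ell(v)$: the neighbors of $v$ colored $j$ at step $i$ are partitioned by their (finer) colors at step $i+1$. Hence $Port_{i+1}(v)$ determines $k^i_j(v)$ for every $j$, i.e. determines $Port_i(v)$, which closes the induction.

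The main obstacle is making rigorous the claim that "the colors in $Pal_{i+1}$ over a given color $j\in Pal_i$" is a well-defined partition of $Pal_{i+1}$ — this is precisely where the inductive hypothesis (that $Col_{i+1}$ refines $Col_i$) is needed, and one must be careful that it is available at the right index before using it to sum portrait entries. Once that refinement-compatibility is in hand, the summation identity $k^i_j(v) = \sum_{\ell} k^{i+1}_\ell(v)$ is just bookkeeping over the neighbor multiset of $v$, and the rest is routine chasing through the bijections $\chi_i, \chi_{i+1}$. A slightly slicker alternative, avoiding explicit induction, is to observe directly that two vertices with equal iteration-$(i+1)$ portraits have, for each iteration-$i$ color class, the same neighbor count (summing portrait entries over that class), which already yields equal iteration-$i$ portraits once we know iteration-$(i+1)$ color determines iteration-$i$ color; but that knowledge is itself the inductive statement, so the induction seems unavoidable.
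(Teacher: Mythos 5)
Your proof is correct and takes essentially the same route as the paper's: induction on the iteration index, reducing color equality to portrait equality through the bijections $\chi_i$ and using the inductive hypothesis to transfer portrait equality down one level. The only difference is bookkeeping in the inductive step --- the paper builds a color-preserving bijection between the neighborhoods of $v_1$ and $v_2$ and applies the hypothesis to the matched neighbor pairs, whereas you use the hypothesis to define the induced refinement map $Pal_{i+1}\to Pal_i$ and recover each coarse count $k^i_j(v)$ by summing the fine counts over its fiber; both executions are sound.
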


\begin{proof}
Let us use induction on $i$. The base case for $i=0$ is true since all the vertices have the same color at the first iteration: $\forall v\in V\ Col_0(v)=0$. 

Inductive step: assume the statement is true for all $i< L$ and prove it for $i=L$. Let $V_1=\{v_1^1,\ldots,v_1^n\}$ be the 	adjacent vertices of $v_1$ and $V_2=\{v_2^1,\ldots,v_2^m\}$ be the adjacent vertices of $v_2$ in $G$. 

Since $Col_{i+1}(v_1)= Col_{i+1}(v_2)$ and $\chi_{L+1}$ is a bijection,
$$Port_{L}(v_1)=\chi^{-1}_{L+1}(Col_{i+1}(v_1))=\chi^{-1}_{L+1}(Col_{L+1}(v_2))=Port_{L}(v_2),$$
which means that $n=m$ and there exists a bijection $\psi\colon V_1\leftrightarrow V_2$, such that $$\forall j\in\overline{1,n}\ \ Col_L(\psi(v_1^j))=Col_L(v_1^j).$$ By induction, 
$$\forall j\in\overline{1,n}\ \ (Col_L(\psi(v_1^j))=Col_L(v_1^j))\Rightarrow (Col_{L-1}(\psi(v_1^j))=Col_{L-1}(v_1^j)),$$
which means that $Port_{L-1}(v_1)=Port_{L-1}(v_2)$ and thus
$$Col_L(v_1)=\chi_L(Port_{L-1}(v_1))=\chi_L(Port_{L-1}(v_2))=Col_L(v_2).$$
\end{proof}

Now we know that at each iteration the number of colors either increases or remains constant. We also know that the palette size cannot be arbitrary large, it is limited by $|V|$. Putting together both results, we get that an IICP may contain only a finite number of the steps where the palette size grows. In not more than $|V|$ iterations, an IICP will come to the situation, where the palettes at two consequent iterations have the same size. The consequences will be radical.


\begin{lemma} Either the size of the palette increases or the two last colorings are isomorphic: $(K_{i}=K_{i+1})\Rightarrow (Col_{i}\sim Col_{i+1})$.
\end{lemma}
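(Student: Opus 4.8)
The plan is to exploit the monotonicity already established in Lemma 2 together with a counting argument. Lemma 2 says that $Col_{i+1}$ is a refinement of $Col_i$ in the sense that any two vertices sharing a color at step $i+1$ already shared a color at step $i$. Equivalently, the partition of $V$ induced by $Col_{i+1}$ refines the partition induced by $Col_i$. Under the hypothesis $K_i = K_{i+1}$, these two partitions have the same number of blocks, and a refinement of a partition into the same number of blocks must be the identical partition. So the first step is to make this explicit: the color classes of $Col_i$ and $Col_{i+1}$ coincide as set partitions of $V$.

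Once the partitions coincide, I would construct the desired bijection $\phi\colon Pal_i \leftrightarrow Pal_{i+1}$ directly. For each color $c \in Pal_i$, pick any vertex $v$ with $Col_i(v) = c$; since $Pal_i = \chi_i[Port_{i-1}[V]]$ is exactly the set of colors actually used, every color has at least one such vertex. Define $\phi(c) := Col_{i+1}(v)$. The key point is that this is well defined and injective precisely because the two partitions agree: if $Col_i(v) = Col_i(v')$ then $v,v'$ are in the same block of the common partition, hence $Col_{i+1}(v) = Col_{i+1}(v')$, giving well-definedness; conversely if $Col_{i+1}(v) = Col_{i+1}(v')$ then again they lie in the same common block, so $Col_i(v) = Col_i(v')$, giving injectivity. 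Surjectivity follows because every color in $Pal_{i+1}$ is used by some vertex, and a cardinality count ($|Pal_i| = K_i = K_{i+1} = |Pal_{i+1}|$) upgrades an injection between finite sets of equal size to a bijection. By construction $\phi(Col_i(v)) = Col_{i+1}(v)$ for all $v$, which is exactly the definition of $Col_i \sim Col_{i+1}$.

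I would phrase the counting step carefully to avoid circularity: from Lemma 2 the map sending the $Col_{i+1}$-class of a vertex to its $Col_i$-class is a well-defined surjection from the $K_{i+1}$ color classes onto the $K_i$ color classes; a surjection between finite sets of equal cardinality is a bijection, so each $Col_i$-class is a single $Col_{i+1}$-class and vice versa. This is the cleanest way to package "refinement plus equal block count implies equality."

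The main obstacle is essentially bookkeeping rather than anything deep: one must be sure that $Pal_i$ and $Pal_{i+1}$ really are the sets of colors that occur (not some larger index set), so that "number of colors" genuinely equals "number of nonempty color classes"; this is guaranteed by the Algorithm, since $Pal_i := \chi_i[Port_{i-1}[V]]$ ranges only over portraits that are actually realized. With that observation in hand the proof is short. I expect the write-up to be only a few lines: invoke Lemma 2 for the refinement, note equal cardinality forces the partitions to coincide, and read off $\phi$ from the common partition.
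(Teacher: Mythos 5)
Your proposal is correct and rests on the same two ingredients as the paper's proof: Lemma 2 (the coloring at step $i+1$ refines the one at step $i$) plus a counting argument exploiting $K_i=K_{i+1}$, including the needed observation that every color in $Pal_i$ is actually realized. The only difference is presentational: the paper argues by contradiction, showing that a multi-valued $\phi$ (two vertices with equal $Col_i$-color but distinct $Col_{i+1}$-colors) would force $K_{i+1}\ge 2+(K_i-1)=K_i+1$, whereas you package the same count directly as ``a refinement with the same number of blocks is the same partition,'' which is an equivalent and arguably cleaner formulation.
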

\begin{proof}
I will prove that if $K_{i}=K_{i+1}$, then the mapping  $\phi\colon Pal_{i}\rightrightarrows Pal_{i+1}$, defined as $\phi(Col_{i}(v))=Col_{i+1}(v)\ \forall v\in V$, is a bijection. Let's argue by contradiction: suppose $K_{i}=K_{i+1}$, but $\phi$ is not a bijection. Since $\phi$ is defined for all $v\in V$, it is completely defined on $Pal_i$ and is a surjection onto $Pal_{i+1}$; it is also injective by Lemma 2. The last option for $\phi$ not to be a bijection is multi-valuedness. Suppose that
\begin{equation}
\label{assu}
\exists v',v'' \in V\colon (v'\ne v'' )\ \&\ (Col_{i}(v')=Col_{i}(v'' ))\ \&\ (Col_{i+1}(v')\ne Col_{i+1}(v'' )).
\end{equation}
Let $V_1=\{v\in V\colon Col_{i}(v)=Col_{i}(v')\}$ and $V_2=V\setminus V_1$ (see Fig.3). The set $V_2$ is not empty, since otherwise $$|Col_i[V_1]|+|Col_i[V_2]|=1+0=K_{i}= K_{i+1}\stackrel{\eqref{assu}}{\ge} 2.$$  

There are no colors from $Col_{i+1}[V_1]$ among the colors from $Col_{i+1}[V_2]$ since otherwise (see Fig.3) $\exists v^*_1\in V_1, v^*_2\in V_2\colon Col_{i+1}(v^*_1)=Col_{i+1}(v_2^*)$ but, by construction of $V_1,V_2$,  $ Col_{i}(v^*_1)=Col_{i}(v')\ne Col_{i}(v^*_2)$, which contradicts the injectiveness of $\phi$. Thus, $$Col_{i+1}[V_1]\cap Col_{i+1}[V_2]=\varnothing\ \ \text{and}\ \ K_{i+1}=|Col_{i+1}[V_1]|+|Col_{i+1}[V_2]|.$$

 The number of distinct colors in $Col_{i+1}[V_2]$ is not less than $|Col_{i}[V_2]|=K_{i}-1$ since $\phi$ is injective and does not map distinct colors into the same one.
 
 By the definition of $V_1$ and assumption \eqref{assu}, we have $v',v'' \in V_1$ and  $$|Col_{i+1}[V_1]|\ge |\{Col_{i+1}(v'),Col_{i+1}(v'' )\}|=2,$$ which means that 
 $$K_{i+1}=|Col_{i+1}[V_1]|+|Col_{i+1}[V_2]|\ge 2+(K_{i}-1)=K_{i}+1$$ 
 and contradicts the assumption $K_{i}= K_{i+1}$. 
\end{proof}

\begin{figure}[H]
\centering
\captionsetup{justification=centering}

\includegraphics[width=0.7\textwidth]{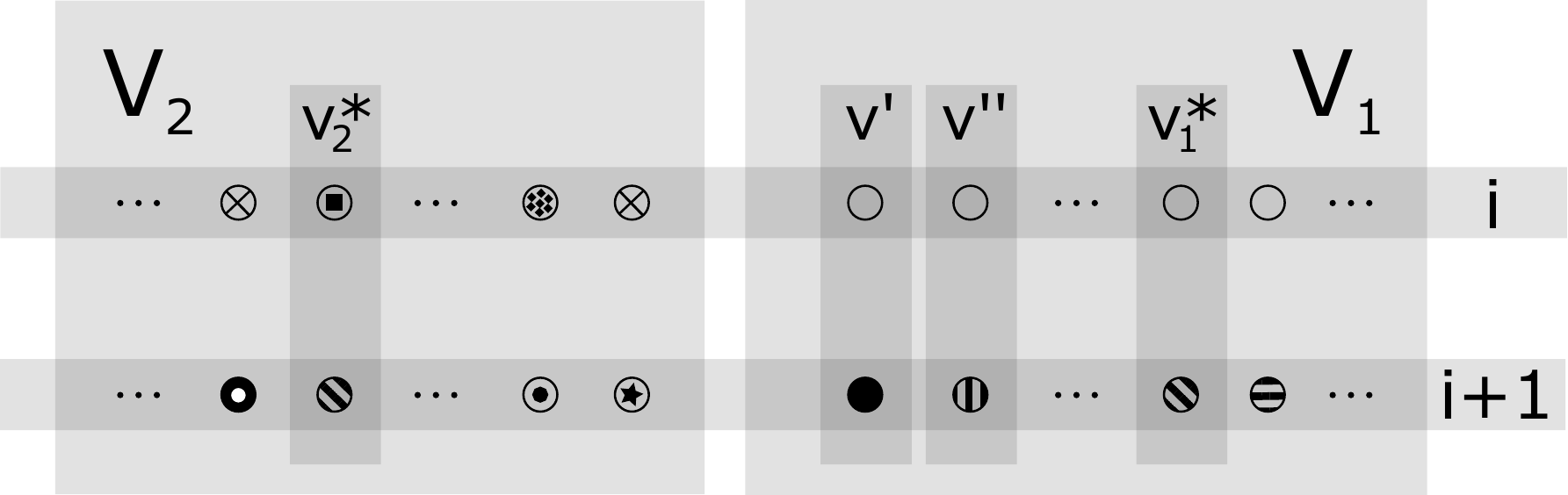}

\caption{Illustration for Lemma 3}
\end{figure}

The results of the previous lemmas may be summed up as a theorem.

\begin{theorem}
	The coloring process described in the Algorithm converges to an unchanging equitable partition in at most $|V|$ iterations.
\end{theorem}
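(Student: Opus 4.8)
The plan is to assemble the three lemmas into a single convergence statement. First I would combine Lemma 2 (the palette size never decreases) with the trivial observation that $K_i \le |V|$ for every $i$, since $Col_i$ maps $V$ onto $Pal_i$. Because $(K_i)_{i\in\mathbb{N}}$ is a nondecreasing sequence of integers bounded above by $|V|$ and starting from $K_0 = 1$ (or $K_1$ at the first genuine iteration), it cannot strictly increase more than $|V| - 1$ times; hence there must exist some index $i^\star \le |V|$ with $K_{i^\star} = K_{i^\star + 1}$, in fact the first such index satisfies $i^\star \le |V|$.

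Next I would invoke Lemma 3: at this index, $K_{i^\star} = K_{i^\star+1}$ forces $Col_{i^\star} \sim Col_{i^\star+1}$, i.e. we have two consecutive isomorphic colorings. Then Lemma 1, applied with $L = i^\star + 1$, gives that all colorings from iteration $i^\star$ onward are pairwise isomorphic: $\forall L', L'' \ge i^\star\ Col_{L'} \sim Col_{L''}$. Since isomorphic colorings induce the same partition of $V$ into color classes (the bijection $\phi$ on palettes merely relabels classes), the induced partition stabilizes from iteration $i^\star$ onward; that this stable partition is equitable is exactly the defining property of the portraits $Port_i(v)$ — each vertex's neighborhood color-count vector depends only on its own color, which is what "equitable partition" (perfect coloring) means. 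So the process "converges" in the sense that the sequence of partitions is eventually constant, reaching its limit in at most $|V|$ iterations.

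One subtlety I would be careful about is the exact bookkeeping of the bound: one must check whether "converges in at most $|V|$ iterations" refers to the index at which stabilization first occurs or the number of strict palette increases. Starting from $K_0 = 1$, after at most $|V| - 1$ strict increases we reach a repeat of the palette size; the repeat is detected at iteration $i^\star \le |V|$, and stabilization of the partition holds for all iterations $\ge i^\star$, so the bound $|V|$ is safe (and could even be sharpened to $|V|-1$ with a slightly finer count, but $|V|$ suffices for the statement). I would also note explicitly that convergence here means convergence of the induced vertex partitions, not literal equality of the functions $Col_i$, since the palettes $Pal_i$ and the indexing bijections $\chi_i$ are only defined up to relabeling.

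The main obstacle is essentially presentational rather than mathematical: all the hard work is already done in Lemmas 1–3, and the only real content of the theorem's proof is (a) correctly extracting the existence of a repeated palette size from monotonicity plus boundedness, and (b) translating "eventually pairwise isomorphic colorings" into the language of "converges to an unchanging equitable partition," which requires spelling out that a coloring isomorphism preserves the partition and that the IICP's portrait construction makes every such partition equitable. No new combinatorial argument is needed.
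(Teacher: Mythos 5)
Your proposal is correct and follows essentially the same route as the paper, which proves the theorem simply by combining the three lemmas: Lemma 2 plus the bound $K_i\le|V|$ forces a repeated palette size within $|V|$ iterations, Lemma 3 converts that repeat into two consecutive isomorphic colorings, and Lemma 1 propagates the isomorphism to all later iterations. Your additional remarks on interpreting convergence as stabilization of the induced (equitable) partition up to relabeling match the intended reading of the statement.
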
	



\paragraph{Edge coloring}Until now, all objects in the CSS were defined through other objects by means of one-type connections. One of the important generalizations of such approach is the introduction of connections of different types. As far as we talk about \textit{closed} semantic systems, the ``type''\  of a connection should be defined within the system. By analogy to the objects defined through their neighbors, the connections could be defined through the objects, which they relate. In the considered graph model of CSS, it means that the edges, just as the vertices, are subjected to an iteration coloring process based on the portraits of neighbors. During such an expanded coloring process, the color of a vertex is defined by the colors of the adjacent vertices and incident edges, and the color of an edge is defined by the colors of its 2 incident vertices.

	
This iterative coloring process for vertices and edges may be converted to the vertices-only coloring case  by the addition of ``virtual''\  vertices corresponding to the edges (Fig. 4) assuming the colors for the vertices and for the edges are taken from the same palette and assigned in accordance with the same laws.


\begin{figure}[H]
	\centering
	\captionsetup{justification=centering}
	
	\includegraphics[width=0.4\textwidth]{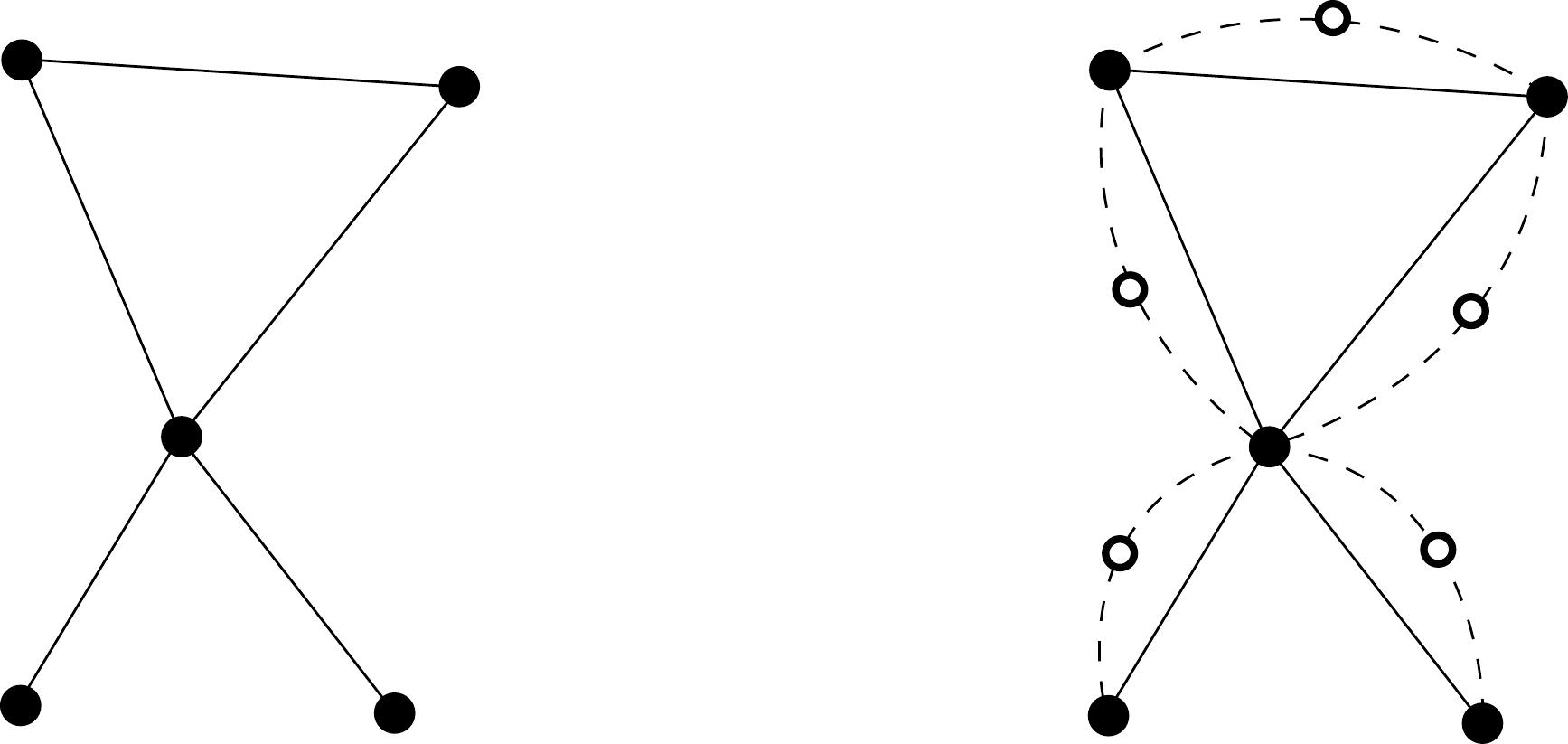}
	
	\caption{Transformation of graph to consider the influence of edges}
\end{figure}

\section{Discussion}

\hspace{0.7 cm}In the present paper I showed that each CSS with a constant structure and discrete time ceases to change rather quickly. The future research could aim at getting the CSS model closer to the real world. The consideration of more complex unsynchronized dynamical structures will change, to all appearances, the further research methods from pure mathematical to statistical and computational.
Below I concern several conceivable directions for the further investigations: 

1. The first step towards real complex systems is to get rid of the discreteness of time. In the majority of real complex systems it seems unnatural to change the states of all elements simultaneously. Such desynchronization will mix the palettes, so the corresponding formal coloring process should be considerably different.

2. The second step is the consideration of the systems that start from an arbitrary state, not only the ``zero state''\  of Algorithm, where $\forall v\in V\ Col_0(v)=0$. It is easy to see that in this case Lemma 2 is not always true (see Fig.5), therefore, Lemma 3 would also need a different proof since the current one relies on Lemma 2. Nevertheless, it seems the proof of convergence can be generalized to the case of arbitrary initial states. 
\begin{figure}[H]
	\centering
	\captionsetup{justification=centering}
	
	\includegraphics[width=0.4\textwidth]{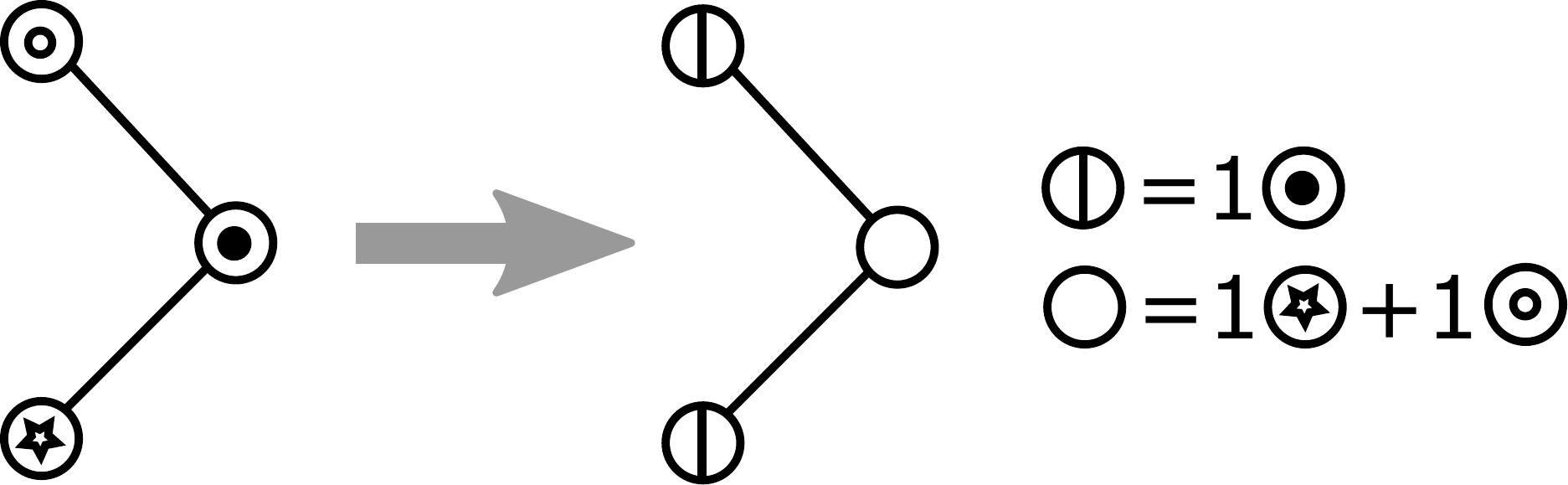}
	
	\caption{Example of the palette's growth starting from a non-zero initial state}
\end{figure}
3. The most interesting research direction is the analysis of behavior of the CSSs with (stochastically) varying sets of objects and connections. Such CSSs could presumably be addressed by means of applied statistics and multi-agent modelling. 
In this respect, the underlying graph model could be replaced with the model of 
horizontal gene transfer \cite{gene} where each agent is ``defined''\  through the influence of the agents encountered during stochastic motion.


\vspace{5mm}{\it The author thanks Dr. D.V.~Khlopin and Dr. A.L.~Gavrilyuk for their comments. 


\begin{thebibliography}{9}

\bibitem{bloom}
Bloom~P.
\newblock {\em How Children Learn the Meanings of Words}.
\newblock A Bradford Book. MIT Press, 2002.

\bibitem{papa}
Papadimitriou C.H.
\newblock {\em Computational Complexity}.
\newblock Pearson, 1993.

\bibitem{godzilla}
Godsil C.D. 
\newblock {\em Algebraic Combinatorics}.
\newblock Chapman and Hall Mathematics Series. Chapman and Hall, 1993.

\bibitem{unger}
Unger S.H.
\newblock {GIT} -- a heuristic program for testing pairs of directed line
  graphs for isomorphism.
\newblock {\em Commun. ACM}, 1964, vol. 7, no. 1, pp. 26--34.

\bibitem{weis}
Weisfeiler B., Lehman A.A.
\newblock [A reduction of a graph to a canonical form and an algebra arising
  during this reduction] (in {R}ussian).
\newblock {\em Nauchno-Technicheskaya Informatsia} [Scientific-technical information], 1968, vol. 2, no. 9, pp. 12--16.

\bibitem{arlaz}
Arlazarov V.L., Zuev I.I., Uskov A.V., Faradzhev I.A.
\newblock An algorithm for the reduction of finite non-oriented graphs to canonical form.
\newblock {\em USSR Computational Mathematics and Mathematical Physics}, 1974, vol. 14, no. 3, pp. 195--201.

\bibitem{mckay}
McKay B.D., Piperno A.
\newblock Practical graph isomorphism, II.
\newblock {\em Journal of Symbolic Computation}, 2014, no. 60, pp. 94--112.

\bibitem{gene}
Syvanen M.
\newblock Horizontal gene transfer: Evidence and possible consequences.
\newblock {\em Annual Review of Genetics}, 1994, vol. 28, no. 1, pp. 237--261.

\end{thebibliography}

\end{document}